\documentclass{article}
\usepackage{spconf,amsmath,graphicx}

\usepackage{amsthm}
\usepackage{amssymb}
\usepackage{graphicx} 
\usepackage{color}

\usepackage{booktabs}

\usepackage{graphicx} 
\usepackage[caption=false,font=footnotesize]{subfig}


\usepackage{algorithm}
\usepackage{algorithmic}

\DeclareMathOperator*{\argmin}{arg\,min}

\DeclareMathOperator{\trace}{tr}

\newcommand{\order}{\mathcal{O}} 
\newcommand{\x}{\mathbf{x}}
\renewcommand{\xi}{{\x}_{i}}
\newcommand{\e}{\mathbf{e}}
\newcommand{\X}{\mathbf{X}}
\newcommand{\Y}{\mathbf{Y}}
\newcommand{\K}{\mathbf{K}}
\newcommand{\y}{\mathbf{y}}
\newcommand{\U}{\mathbf{U}}

\newcommand{\R}{\mathbb{R}}    
\newcommand{\LLL}{\mathcal{L}}
\newcommand{\SSS}{\mathcal{S}}
\newcommand{\FFF}{\mathcal{F}}

\newcommand{\muu}{\boldsymbol{\mu}}
\newcommand{\Lam}{\boldsymbol{\Lambda}}
\newcommand{\Phii}{\boldsymbol{\Phi}}
\newcommand{\cc}{\mathbf{c}}
\newcommand{\CC}{\mathbf{C}}
\newcommand{\EE}{\mathbf{E}}


\theoremstyle{plain}
\newtheorem{thm}{\protect\theoremname}
\theoremstyle{plain}

\providecommand{\lemmaname}{Lemma}
\providecommand{\theoremname}{Theorem}

\title{A Randomized Approach to Efficient Kernel Clustering}
%
\name{Farhad Pourkamali-Anaraki$^{\star}$ \qquad Stephen Becker$^{\dagger}$}
\address{$^{\star}$ Department of Electrical, Computer, and Energy Engineering \\
	$^{\dagger}$ Department of Applied Mathematics\\ University of Colorado at Boulder, Boulder, CO 80309 USA\thanks{Copyright 2016 IEEE. Published in the IEEE 2016 Global Conference on Signal and Information Processing (GlobalSIP 2016), scheduled for Decemer 7-9, 2016 in Greater Washington, D.C., USA. Personal use of this material is permitted. However, permission to reprint/republish this material for advertising or promotional purposes or for creating new collective works for resale or redistribution to servers or lists, or to reuse any copyrighted component of this work in other works, must be obtained from the IEEE. Contact: Manager, Copyrights and Permissions / IEEE Service Center / 445 Hoes Lane / P.O. Box 1331 / Piscataway, NJ 08855-1331, USA. Telephone: + Intl. 908-562-3966.}}
%
%

\begin{document}
\ninept
\maketitle
\begin{abstract}
Kernel-based K-means clustering has gained popularity due to its simplicity and the power of its implicit non-linear representation of the data. A dominant concern is the memory requirement since memory scales as the square of the number of data points. We provide a new analysis of a class of approximate kernel methods that have more modest memory requirements, and propose a specific one-pass randomized kernel approximation followed by standard K-means on the transformed data. The analysis and experiments suggest the method is accurate, while requiring drastically less memory than standard kernel K-means and significantly less memory than Nystr\"om based approximations.
\end{abstract}
\begin{keywords}
Kernel methods, Unsupervised learning, Low-rank approximation, Randomized algorithm
\end{keywords}
\section{Introduction}
\label{sec:introduction}
Kernel-based approaches are popular methods for  supervised and unsupervised learning~\cite{LearningWithKernels}.
The $(i,j)$ entry of a kernel matrix $\K$ represents the inner product between the representations of data points $\x_i$ and $\x_j$ in a lifted space, and this lifting allows one to use linear techniques in the higher (or infinite) dimensional space which correspond to non-linear techniques in the original space. 
For example, the two classes of data in Fig.~\ref{fig:synthetic_original_data} are not linearly separable, but they become linearly separable after applying a suitable kernel (Fig.~\ref{fig:synthetic_mapped_data}).

With $n$ data points, the kernel matrix is $n\times n$, and for large modern data sets this is infeasible to store or compute with. For this reason, there is a long history of low-rank approximations of $\K$, starting with incomplete Cholesky factorizations in \cite{KatyaKernel2001}, and excellently summarized in \cite{BachKernelReview}. In particular, \cite{BachKernelReview} argues that the design of the approximation must go hand-in-hand with the learning algorithm, and proceeds to analyze the case of kernel regression.

The aim of this paper is to analyze kernel approximations in hard clustering and suggest a specific one-pass randomized kernel approximation. This approximation is based on a one-pass variant of the popular randomized approach described in \cite{Martinson_SVD}. Such an approach had previously been used to approximate a small inner $m \times m$ matrix in the Nystr\"om method~\cite{LiKwokNystrom2010}, but with accuracy limited by the large sampling requirements of the Nystr\"om approach. Instead, we precondition the kernel matrix in a streaming manner and \emph{then} sample, which allows us to take drastically fewer samples while maintaining excellent clustering performance (cf.~Table~\ref{table:synthetic_accuracy}).
Our algorithm is not necessarily faster than the Nystr\"om approach, but has lower memory requirements, such as around $10$ times lower memory for both the synthetic data in Table~\ref{table:synthetic_accuracy} and the real data in Fig.~\ref{fig:segment_data}.
A particular benefit to our proposal is that it consists of a distinct preprocessing phase followed by the standard K-means algorithm on transformed data, thus allowing one to leverage existing algorithm libraries.
\subsection{Notation}
\vspace*{-2mm}
We denote column vectors with lower-case bold letters and matrices with upper-case bold letters. Let $\|\K\|_F$ and $\|\K\|_2$ denote the Frobenius norm and spectral norm respectively. Moreover, $\|\K\|_*=\trace((\K^T\K)^{1/2})$ represents the trace norm, where $\trace(\cdot)$ is the trace operator. 

Also, we represent the entry in the $i$-th row and the $j$-th column of $\K$ as $[\K]_{ij}$. We let $\e_i$ denote the $i$-th vector of the canonical basis in $\R^K$, where entries are all zero except for the $i$-th one which is $1$.
\vspace*{-2mm}
\section{Preliminaries}\label{sec:prelim}
\vspace*{-2mm}
\subsection{K-means Clustering}\label{sec:K-means}
Consider a collection of $n$ data samples in $\R^p$, where $\X=[\x_1,\ldots,\x_n]\in\R^{p\times n}$ represents the data matrix. The K-means algorithm is a popular hard clustering technique that splits the data set into a known number of $K$ clusters. The resulting $K$-partition $\SSS=\{\SSS_1,\ldots,\SSS_K\}$ is a collection of $K$ non-empty pairwise disjoint sets that covers the data set. Moreover, each cluster $\SSS_k$ is represented using a vector $\muu_k\in\R^p$ that is associated with the $k$-th cluster.

Hence, the K-means objective is to find the optimal $K$-partition by minimizing the total sum of the squared Euclidean distances of each data sample to the closest cluster:
\begin{equation}
\FFF\left(\SSS\right)=\sum_{i=1}^{n}\sum_{k=1}^{K}t_{ik}\|\x_i-\muu_k\|_2^2\label{eq:K-means-objective}
\end{equation}
where $t_{ik}\in\{0,1\}$ is a binary indicator variable and $\mathbf{t}_i=[t_{i1},\ldots,t_{iK}]^T$ is the $k$-th canonical basis vector in $\R^K$ if and only if $\x_i$ belongs to the $k$-th cluster $\SSS_k$. 

Minimizing the objective function in~\eqref{eq:K-means-objective} is known to be NP-hard,
so the standard approach is to look for an approximate solution by an \emph{iterative} method that guarantees convergence only to a local minimum~\cite{Bishop}. In the first step, the assignment of data samples is updated with $\{\muu_k\}_{k=1}^{K}$ held fixed. In the next step, the cluster representatives $\{\muu_k\}_{k=1}^{K}$ are updated based on the most recent assignment.

K-means clustering works well only if all pairs of clusters are linearly separable, and does
  not perform well on finding non-linearly separable clusters of varying densities and distributions~\cite{Kernel_Kmeans_Dhillon}. A prominent approach to tackle this problem is to employ a non-linear distance function using the kernel trick~\cite{aizerman1964theoretical} in machine learning. The resulting algorithm is the so-called Kernel K-means which we explain in the next section. 
\vspace*{-3mm}
\subsection{The Kernel K-means Algorithm}\label{sec:Kernel-kmeans}
Kernel-based methods have provided a straightforward way to deal with non-linear structure in datasets. To be formal, each $\x_i$ is mapped to a higher dimensional feature space using the non-linear mapping $\Phii$, $\x_i\!\mapsto\!\!\Phii(\x_i)$ for $i\!\!=\!\!1,\ldots,\!n$. The kernel trick is based on the observation that many algorithms only need to compute the inner product between data points, and not see the data points themselves.  Thus the trick avoids the explicit mapping by allowing one to compute inner products between mapped data points in the feature space using a non-linear similarity measure used in Euclidean space $\R^p$: 
\begin{equation}
\langle\Phii(\x_i),\Phii(\x_j)\rangle=\kappa(\x_i,\x_j),\;\; \forall i, j\in\{1,\ldots,n\}
\end{equation}
where $\kappa(\cdot,\cdot)$ is a Mercer kernel function such that $\kappa$ induces a positive semidefinite matrix $[\K]_{ij}=\kappa(\x_i,\x_j)$ for all input data $\{\x_i\}_{i=1}^{n}$. Examples of such kernels include polynomial kernels $\kappa(\x,\y)=(\langle\x,\y\rangle+\gamma)^d$ and Gaussian radial basis function kernels $\kappa(\x,\y)=\exp(-\gamma\|\x-\y\|_2^2)$ with parameters $\gamma\in\R^+$ and $d\in\mathbb N$~\cite{Nonlinear_kernel_DL,anaraki2013kernel}.

The Kernel K-means algorithm finds a $K$-partition of the mapped data $\{\Phii(\x_i)\}_{i=1}^{n}$ by minimizing:  
\begin{equation}
\LLL\left(\SSS\right)=\sum_{i=1}^{n}\sum_{k=1}^{K}t_{ik}\|\Phii(\x_i)-\muu_k\|_2^2.\label{eq:Kernel-K-means-objective}
\end{equation}
The optimization problem of minimizing~\eqref{eq:Kernel-K-means-objective} can be solved using the same iterative procedure of K-means. To see this, consider the centroid of the $j$-th cluster $\muu_j=\frac{1}{|\SSS_j|}\sum_{\Phii(\x_l)\in\SSS_j}\Phii(\x_l)$.
This centroid cannot be computed explicitly, but 
we can compute the distance between each mapped data sample $\Phii(\x_i)$ and the centroid:
\begin{align}
\hspace{-1mm}& \hspace{-1mm} \|\Phii(\x_i)-\muu_j\|_2^2=\langle\Phii(\x_i)-\muu_j,\Phii(\x_i)-\muu_j\rangle \nonumber \\
\hspace{-1mm}&\hspace{-1mm} =[\K]_{ii}-\frac{2}{|\SSS_j|}\sum_{\Phii(\x_l)\in\SSS_j}\!\![\K]_{il}+\frac{1}{|\SSS_j|^2}\sum_{\Phii(\x_l),\Phii(\x_l')\in\SSS_j}\!\![\K]_{ll'}.
\end{align}
Hence, we see that Kernel K-means is an iterative algorithm that requires access to the full kernel matrix $\K$ without the need to explicitly map the data points. However, this clustering technique requires the storage and handling of a large kernel matrix $\K$ in each iteration. Therefore, the quadratic complexity of $\order(n^2)$ to store or $\order(n^2p)$ to compute the kernel matrix for each iteration makes the Kernel K-means algorithm non-scalable to large data sets.
\vspace*{-3mm}
\subsection{Prior Work on Efficient Kernel-Based Learning}\label{sec:Relation-Prior} 
Much research has focused on approximating the kernel matrix using a low-rank decomposition; more recent works focus on a sum of low-rank and diagonal or low-rank and sparse decompositions, but similar memory and computation considerations apply.
Note that the kernel matrix is a symmetric positive semidefinite matrix. Thus, its eigenvalue decomposition can be used to express a low-rank approximation:
\begin{equation}
\K\approx\U_r\Lam_r\U_r^T=\left(\U_r\Lam_r^{1/2}\right)\left(\Lam_r^{1/2}\U_r^T\right)=\Y^T\Y\label{eq:decom_kernel}
\end{equation}
where $\Lam_r\in\R^{r\times r}$, $r<n$, is a diagonal matrix containing the top $r$ eigenvalues of $\K$ in descending order and $\U_r\in\R^{n\times r}$ contains the associated orthonormal eigenvectors. Note that the eigenvalues of the kernel matrix are non-negative since $\K$ is positive semidefinite. The decomposition $\K\!\approx\!\Y^T\Y$ in~\eqref{eq:decom_kernel} essentially \emph{linearizes} the kernel matrix. Therefore, one can directly work with the new samples $\Y=[\y_1,\ldots,\y_n]$ embedded in Euclidean space $\R^r$, while respecting the non-linear similarities in the kernel matrix $\K$. Hence, this technique can be viewed as a preprocessing stage that eliminates the need to store and manipulate the kernel matrix during the learning process. Moreover, it is shown that the eigenvalue decomposition of the kernel matrix can be used to infer the number of clusters~\cite{girolami2002mercer}.

However, direct eigenvalue decomposition of large kernel matrices is often a demanding task that requires $\order(n^2)$ space and $\order(n^3)$ time. The popular Nystr{\"o}m method is an efficient algorithm to find low-rank approximations of large symmetric positive semidefinite matrices. The original Nystr{\"o}m method that was introduced in~\cite{Nystrom2001} is based on sampling a small subset of $m$ columns of $\K$ using uniform sampling without replacement. This one-pass algorithm was revisited in~\cite{Nystrom_Kernel_Approx}, where a data-dependent non-uniform sampling distribution was presented that requires at least two passes over the kernel matrix. The variants of the Nystr{\"o}m method have been proposed in the literature to analyze various sampling strategies, including~\cite{kumar2009sampling,zhang2010clusteredNys,si2014memory}. The recent paper~\cite{sun2015review} reviews different kinds of Nystr\"om methods for large-scale machine learning.
 
\vspace*{-3mm}
\section{Linearized Kernel K-means Clustering}\label{sec:Linear-kernel-kmeans}
In this section, we analyze the quality of Kernel K-means clustering under the low-rank decomposition of the kernel matrix $\K=\Y^T\Y$ given in~\eqref{eq:decom_kernel}. This low-rank decomposition can be done by using any low-rank approximation technique and we do not make any assumptions in this section. Our analysis compares the optimal solution of Kernel K-means on the new samples $\Y=[\y_1,\ldots,\y_n]\in\R^{r\times n}$ with the optimal solution of Kernel K-means on $\X=[\x_1,\ldots,\x_n]$. To do this, we first present an alternative formulation of the Kernel K-means objective function in equation~\eqref{eq:Kernel-K-means-objective}. This type of analysis that we follow in this paper is commonly used in the literature for K-means clustering, e.g.,~\cite{kmeans_plusplus,Randomized_Dim_K_means}. 

To begin, let us consider the matrix of mapped data samples $\Phii(\X)\!\!=\!\![\Phii(\x_1),\ldots,\Phii(\x_n)]$. We also define the cluster indicator matrix $\CC\!\!=\!\![\cc_1,\ldots,\cc_n]\!\!\in\!\R^{K\times n}$, where each row corresponds to a cluster. Each column of $\CC$ represents the cluster membership of $\Phii(\x_j)$ with only one nonzero entry such that $\cc_j\!\!=\!\!(1/\sqrt{|\SSS_i|})\e_i$ if and only if $\Phii(\x_j)$ belongs to the $i$-th cluster $\SSS_i$; denote the set of all such indicator matrices by $\mathcal{C}$. Given the matrix $\CC$, the matrix product $\Phii(\X)\CC^T$ consists of $K$ centroids as columns, where the $i$-th column is $\sqrt{|\SSS_i|}\muu_i$. Hence, $\Phii(\X)\CC^T\cc_j\!=\!(1/\sqrt{|\SSS_i|})(\sqrt{|\SSS_i|}\muu_i)\!=\!\muu_i$ selects the centroid of the $i$-th cluster $\SSS_i$ that $\Phii(\x_j)$ belongs to it. Thus, the Kernel K-means objective function~\eqref{eq:Kernel-K-means-objective} can be written as:
\begin{align}
\LLL(\CC)&  =\!\sum_{j=1}^{n}\|\Phii(\x_j)-\Phii(\X)\CC^T\cc_j\|_2^2 \!=\!\|\Phii(\X)-\Phii(\X)\CC^T\CC \|_F^2\nonumber \\
&=\!\trace\left((\mathbf{I}_{n\times n}-\CC^T\CC)\K(\mathbf{I}_{n\times n}-\CC^T\CC) \right)
\label{eq:KKmeans_Reform_Step1}
\end{align}
where we used $\|\mathbf{A}\|_F^2=\trace(\mathbf{A}^T\mathbf{A})$.

Next, we present some properties of the matrix $\CC$ defined above. Note that under the mild assumption that every cluster has at least one member, the cluster indicator matrix has $K$ orthonormal rows, i.e., $\CC\CC^T=\mathbf{I}_{K\times K}$. This follows from the normalization in our definition of $\CC$ and the fact that Kernel K-means is a hard clustering algorithm. As a result, we get $(\CC^T\CC)^2=\CC^T\CC$ which shows that both $\CC^T\CC$ and $(\mathbf{I}_{n\times n}-\CC^T\CC)$ are projection matrices. 

In the following theorem, we characterize the accuracy of Kernel K-means under the low-rank decomposition of the kernel matrix $\K$. In this case, one should replace the kernel matrix $\K$ with its low-rank decomposition $\Y^T\Y$ in the reformulated objective function~\eqref{eq:KKmeans_Reform_Step1}. This process can also be viewed as applying standard K-means on the new samples $\Y=[\y_1,\ldots,\y_n]$ in $\R^r$. Before stating the result, we emphasize that even though finding an optimal solution for K-means clustering is computationally difficult (NP-hard), the set of possible $K$-partitions of a finite number of data samples is finite. Thus, an optimal solution exists regardless of the difficulty in finding the optimal solution.
\vspace*{-2mm}
\begin{thm}
	Let $\CC^*$ be an optimal solution of Kernel K-means in feature space using the full kernel matrix $\K$:
	\begin{equation} 
	\CC^*\in \argmin_{\CC\in\mathcal{C}} \LLL(\CC)
	\end{equation}
	where $	\LLL(\CC)=\trace\left((\mathbf{I}_{n\times n}-\CC^T\CC)\K(\mathbf{I}_{n\times n}-\CC^T\CC)\right)$.
	Moreover, let $\widehat{\CC}$ be an optimal solution of the approximate Kernel K-means using 
	an approximation
	$\widehat{\K}=\K-\EE$ with $\widehat{\K}=\Y^T\Y$ positive semidefinite:
	\begin{equation} 
	\widehat{\CC}\in\argmin_{\CC\in\mathcal{C}} \;\trace\left((\mathbf{I}_{n\times n}-\CC^T\CC)\Y^T\Y(\mathbf{I}_{n\times n}-\CC^T\CC)\right).
	\end{equation}
	Then, we have: 
	\begin{equation} \label{eq:result1}
	\LLL(\widehat{\CC})-\LLL(\CC^*)\leq 2\|\EE\|_*
	\end{equation}
	where $\EE$ is the low-rank approximation error of the kernel matrix $\K$.
	Furthermore, if $\widehat{\K}$ is the best rank $r$ approximation to $\K$, then
	$\EE$ is positive semidefinite and 
	we can improve Eq.~\eqref{eq:result1} to
	\begin{equation} \label{eq:result2}
	\LLL(\widehat{\CC})-\LLL(\CC^*)\leq\trace(\EE).
	\end{equation}
\end{thm}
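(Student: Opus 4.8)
The plan is to reduce the statement to a linear perturbation estimate. Since $\CC\in\mathcal{C}$ forces $\CC\CC^T=\mathbf{I}_{K\times K}$, the matrix $P_{\CC}:=\CC^T\CC$ is an orthogonal projection, hence so is $\mathbf{I}_{n\times n}-P_{\CC}$, and in particular $(\mathbf{I}_{n\times n}-P_{\CC})^2=\mathbf{I}_{n\times n}-P_{\CC}$. Using this idempotence together with the cyclic invariance of the trace, I would first rewrite both objectives in linear form,
\[
\LLL(\CC)=\trace\!\big(\K\,(\mathbf{I}_{n\times n}-P_{\CC})\big),\qquad
\widehat{\LLL}(\CC):=\trace\!\big(\Y^T\Y\,(\mathbf{I}_{n\times n}-P_{\CC})\big),
\]
so that, writing $\EE=\K-\widehat{\K}=\K-\Y^T\Y$, we have $\LLL(\CC)-\widehat{\LLL}(\CC)=\trace\!\big(\EE\,(\mathbf{I}_{n\times n}-P_{\CC})\big)$ for every $\CC\in\mathcal{C}$.

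Next I would run the standard ``add and subtract'' argument familiar from the K-means literature. Inserting $\widehat{\LLL}(\widehat{\CC})$ and $\widehat{\LLL}(\CC^*)$ gives
\[
\LLL(\widehat{\CC})-\LLL(\CC^*)=\big[\LLL(\widehat{\CC})-\widehat{\LLL}(\widehat{\CC})\big]+\big[\widehat{\LLL}(\widehat{\CC})-\widehat{\LLL}(\CC^*)\big]+\big[\widehat{\LLL}(\CC^*)-\LLL(\CC^*)\big].
\]
The middle bracket is $\le 0$ because $\widehat{\CC}$ minimizes $\widehat{\LLL}$ over $\mathcal{C}$; the first equals $\trace(\EE Q_1)$ and the third equals $-\trace(\EE Q_2)$, where $Q_1:=\mathbf{I}_{n\times n}-\widehat{\CC}^T\widehat{\CC}$ and $Q_2:=\mathbf{I}_{n\times n}-(\CC^*)^T\CC^*$ are orthogonal projections. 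Hence $\LLL(\widehat{\CC})-\LLL(\CC^*)\le\trace(\EE Q_1)-\trace(\EE Q_2)$.

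The one step that needs genuine work is the bound $|\trace(\EE Q)|\le\|\EE\|_*$, valid for any symmetric $\EE$ and any $Q$ with $\mathbf{0}\preceq Q\preceq\mathbf{I}_{n\times n}$. I would prove it from the eigendecomposition $\EE=\sum_i\lambda_i\mathbf{u}_i\mathbf{u}_i^T$: since $0\le\mathbf{u}_i^TQ\mathbf{u}_i\le 1$, one gets $\sum_{\lambda_i<0}\lambda_i\le\trace(\EE Q)=\sum_i\lambda_i\,(\mathbf{u}_i^TQ\mathbf{u}_i)\le\sum_{\lambda_i>0}\lambda_i$, and both extreme sums are at most $\sum_i|\lambda_i|=\|\EE\|_*$ in magnitude. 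Applying this to $Q_1$ and to $Q_2$ and summing yields \eqref{eq:result1}.

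Finally, for \eqref{eq:result2}, when $\widehat{\K}$ is the best rank-$r$ approximation of the positive semidefinite matrix $\K$ we have $\EE=\sum_{i>r}\lambda_i(\K)\,\mathbf{u}_i\mathbf{u}_i^T\succeq\mathbf{0}$; then $\trace(\EE Q_1)\le\trace(\EE)$ because $Q_1\preceq\mathbf{I}_{n\times n}$, and $\trace(\EE Q_2)\ge 0$ because it is the trace of a product of two positive semidefinite matrices, so the estimate sharpens to $\LLL(\widehat{\CC})-\LLL(\CC^*)\le\trace(\EE)$. I do not anticipate any real obstacle beyond careful bookkeeping of signs; the crux is the trace inequality above, which becomes elementary once one diagonalizes $\EE$ rather than reaching for a matrix H\"older / von Neumann inequality directly.
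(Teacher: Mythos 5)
Your proof is correct and follows essentially the same route as the paper's (sketched) proof: the projection/idempotence properties of $\CC^T\CC$, the standard add-and-subtract optimality argument, and a duality-type bound of $\trace\left(\EE(\mathbf{I}_{n\times n}-\CC^T\CC)\right)$ by $\|\EE\|_*$, sharpened to $\trace(\EE)$ when $\EE\succeq\mathbf{0}$. The only cosmetic difference is that you establish the key trace bound by diagonalizing $\EE$ and using $0\le\mathbf{u}_i^TQ\mathbf{u}_i\le1$, rather than citing the trace-norm/spectral-norm H\"older duality with $\|\mathbf{I}_{n\times n}-\CC^T\CC\|_2\le1$, which is precisely the inequality the paper invokes.
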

\begin{proof}
The proof follows from the properties of the cluster indicator matrix $\CC$ and applying H\"older's inequality, i.e., the trace norm and spectral norm are dual. 
\end{proof}
This theorem indicates that the optimal objective value under the approximate matrix is not far from the true objective value. The bound is tight to within a small constant, as one can construct adversarial examples in any dimension where 
$\LLL(\widehat{\CC})-\LLL(\CC^*)\ge \frac{1}{2}\|\EE\|_*$.
    \vspace{-.3cm}
\section{The Proposed Method}\label{sec:Efficient-kernel-kmeans} 
    \vspace{-.3cm}
In this section, we present an efficient method for Kernel K-means clustering on large-scale data sets. The key component of our method is to use highly efficient and accurate low-rank approximation techniques that require just a \emph{single pass} over the kernel matrix to eliminate the need to store or recompute large kernel matrices. In~\cite{Martinson_SVD}, a class of randomized algorithms were proposed to construct low-rank approximations of large matrices, and tail bounds given on the spectral and Frobenius norm of the error $\EE=\K-\widehat{\K}$ (from which one can bound $\|\EE\|_*$ as well). We show how these randomized methods can be employed to achieve improved performance compared to the standard Nystr{\"o}m approximation in applications with limited memory resources. 

First, we briefly explain the one-pass eigenvalue decomposition algorithm from~\cite{Martinson_SVD}. Given a symmetric matrix $\K\in\R^{n\times n}$, the first step is to find a good basis for both the column space and the row space of $\K$. To this end, a standard Gaussian random matrix $\boldsymbol{\Omega}\!\in\!\R^{n\times r'}$ whose entries are i.i.d.~$\mathcal{N}(0,1)$ variables is generated with $r'\!\!=\!\!(r+l)$ for some rank $r$ and oversampling $l$. The oversampling parameter is often used to increase the accuracy of the method. Then, $\mathbf{W}\!\!=\!\!
\K\boldsymbol{\Omega}$ is computed and one finds $\mathbf{Q}\!\in\!\R^{n\times r}$ whose columns form an orthonormal basis for the range of $\mathbf{W}$; this can be done by computing the $r$ leading left singular vectors of $\mathbf{W}$ or via the QR decomposition. Therefore, we have $\K\!\approx\!\mathbf{Q}(\mathbf{Q}^T\K\mathbf{Q})\mathbf{Q}^T$ for which a naive approach requires one more pass over $\K$ to compute $\mathbf{B}=\mathbf{Q}^T\K\mathbf{Q}$ and find its low-rank decomposition. However, the matrix $\mathbf{B}$ can be computed by solving the equation $\mathbf{B}(\mathbf{Q}^T\boldsymbol{\Omega})\!=\!(\mathbf{Q}^T\mathbf{W})$ without revisiting $\K$~\cite{Martinson_SVD}. Finally, the eigenvalue decomposition of $\mathbf{B}\in\R^{r\times r}$ yields the desired rank $r$ approximation of $\K$.   

The major drawback of this approach is the memory and computation burden associated with the Gaussian random matrix $\boldsymbol{\Omega}$. Computing the matrix $\mathbf{W}$ takes $\order(n^2r')$ time which is quadratic in the number of samples. To address this problem, the Gaussian random matrix is replaced with a much more efficient structured random matrix $\boldsymbol{\Omega}=\mathbf{D}\mathbf{H}\mathbf{R}$~\cite{ImprovedAnalysis,Martinson_SVD}. The matrix $\mathbf{D}=\mathbf{D}^T\in\R^{n\times n}$ is a stochastic diagonal matrix whose entries on the main diagonal are random variables drawn uniformly from $\{\pm1\}$. 
The matrix $\mathbf{H}=\mathbf{H}^T\in\R^{n\times n}$ is the Hadamard matrix for which matrix-vector multiplication can be implemented in $\order(n\log(n))$ complexity, hence it is 
inexpensive to multiply and store compared to the Gaussian matrix. The matrix $\mathbf{H}$ is not stored explicitly, and applying $\mathbf{H}$ to a matrix is efficient in parallel and distributed environments; our implementation uses the pthread library and sees a $11$ times speedup over the non-parallel version when using $16$ threads.
The sub-sampling matrix $\mathbf{R}\in\R^{n\times r'}$ consists of $r'$ columns of the identity matrix $\mathbf{I}_{n\times n}$ drawn uniformly random without replacement. 
\begin{algorithm}[tb]
	\caption{One-Pass Kernel K-means}
	\label{alg:OnePassEig}
	\textbf{Input:} kernel matrix $\K\in\R^{n\times n}$, rank $r$, oversampling $l$, number of clusters $K$, number of iterations 
	
	\textbf{Output:} cluster indicator matrix $\CC$
	\begin{algorithmic}[1]
		\STATE $r'\leftarrow r+l$,\hspace{2mm} $\mathbf{R}\in\R^{n\times r'}$: random sampling matrix
		\STATE $\mathbf{W}\in\R^{n\times r'}\leftarrow(\mathbf{R}^T\mathbf{H}\mathbf{D}\K)^T$
		\STATE find an orthonormal matrix $\mathbf{Q}\in\R^{n\times r}$ by QR decomposition or $r$ leading left singular vectors of $\mathbf{W}$
		\STATE solve $\mathbf{B}(\mathbf{Q}^T\boldsymbol{\Omega})=(\mathbf{Q}^T\mathbf{W})$
		\STATE $\mathbf{B}=\mathbf{V}\boldsymbol{\Sigma}\mathbf{V}^T$
		\STATE $\Y=\boldsymbol{\Sigma}^{1/2}\mathbf{V}^T\mathbf{Q}^T\in\R^{r\times n}$
		\STATE perform standard K-means on $\Y=[\y_1,\ldots,\y_n]$ in $\R^r$ 
	\end{algorithmic}
\end{algorithm}

Our proposed efficient Kernel K-means clustering method is presented in Alg.~\ref{alg:OnePassEig}. We emphasize that our method requires only one pass over the columns of the kernel matrix $\K$, and that batches of columns of $\K$ can be constructed on-the-fly, so the entire kernel matrix $\K$ is never formed in memory: the minimal memory requirement is $\order(r'n)$.
Furthermore, the computation cost in each iteration of K-means is $\order(rnK)$ and the rank $r$ determines the error in the low-rank approximation of the kernel matrix $\K$. 

We also compare various aspects of our proposed approach with the Nystr{\"o}m method. The basic idea behind the Nystr{\"o}m method is to sample $m$ columns from the kernel matrix $\K$.
It is obvious that the more columns are sampled, the more accurate the resulting low-rank approximation is. The sampling strategy plays an important role in the accuracy of the Nystr{\"o}m method. The basic method originally proposed by~\cite{Nystrom2001} is a one-pass algorithm that employs uniform sampling without replacement.
Hence, our method in this paper is similar to the standard one-pass Nystr{\"o}m method as they both use uniform sampling without replacement. However, our method takes advantage of the preconditioning transformation of the kernel matrix $\K\mapsto(\mathbf{H}\mathbf{D})\K$ before sub-sampling. As shown in~\cite{ImprovedAnalysis}, the transformation $\mathbf{H}\mathbf{D}$ equilibrates row norms which eliminates the necessity to use more sophisticated sampling strategies. 
\begin{figure}[t]
	\vskip 0.2in
	\begin{center}
		\centerline{\includegraphics[width=0.55\columnwidth]{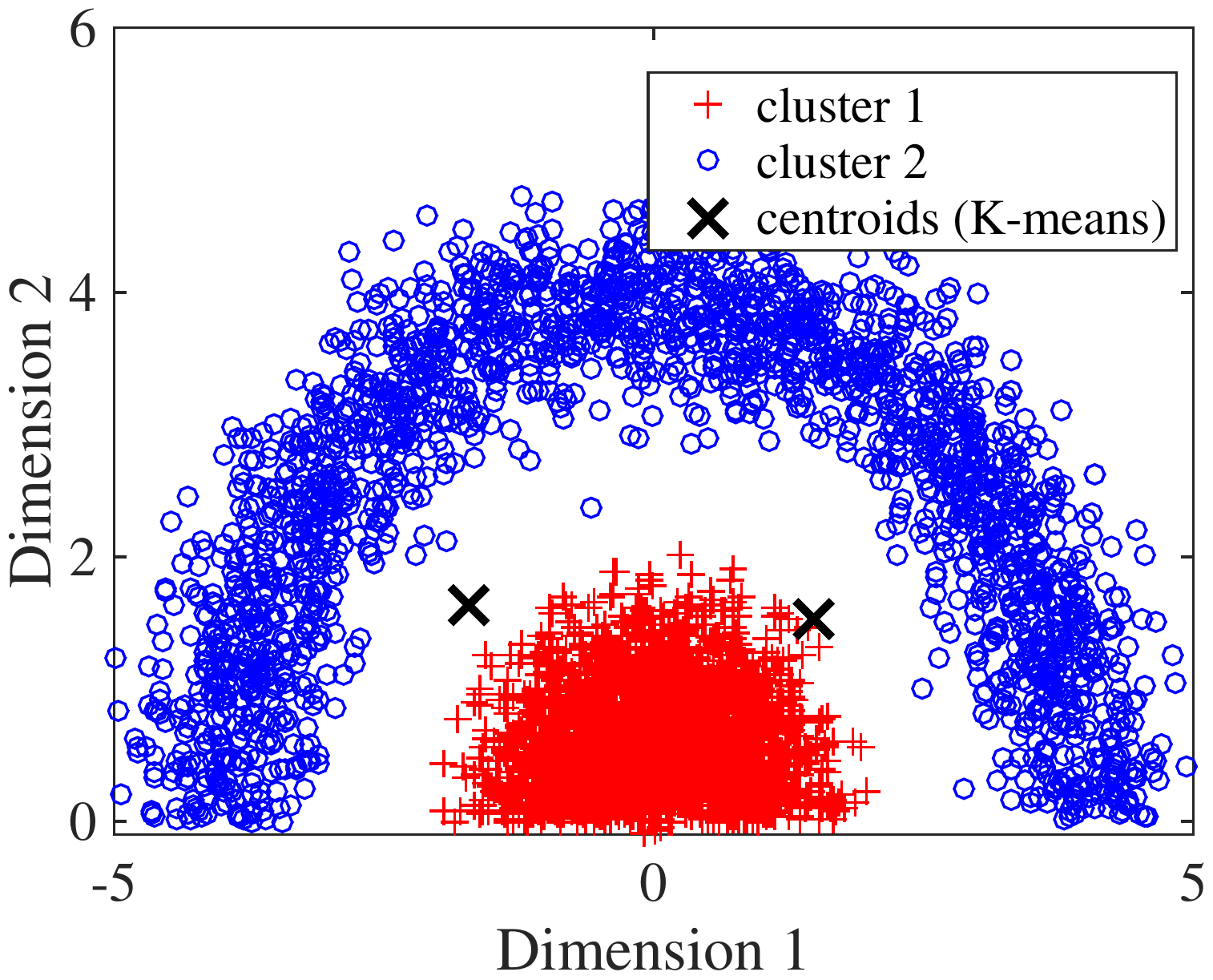}}
		\vspace*{-4mm}
		\caption{Original data. The centroids estimated by K-means are shown, which are clearly unhelpful at finding the true clusters.}
		\label{fig:synthetic_original_data}
	\end{center}
	\vskip -0.4in
\end{figure}

To gain some intuition, we consider a synthetic data set shown in Fig.~\ref{fig:synthetic_original_data}. This data set consists of $n\!=\!4000$ samples in $\R^2$ that are non-linearly separable but not linearly separable. Thus,  standard K-means is not able to identify these two clusters, and the two centroids selected by  standard K-means do not describe the true clusters.
\begin{figure}[t]
	\begin{centering}
		\subfloat[Exact Decomp.]{\begin{centering}
				\includegraphics[width=0.48\columnwidth]{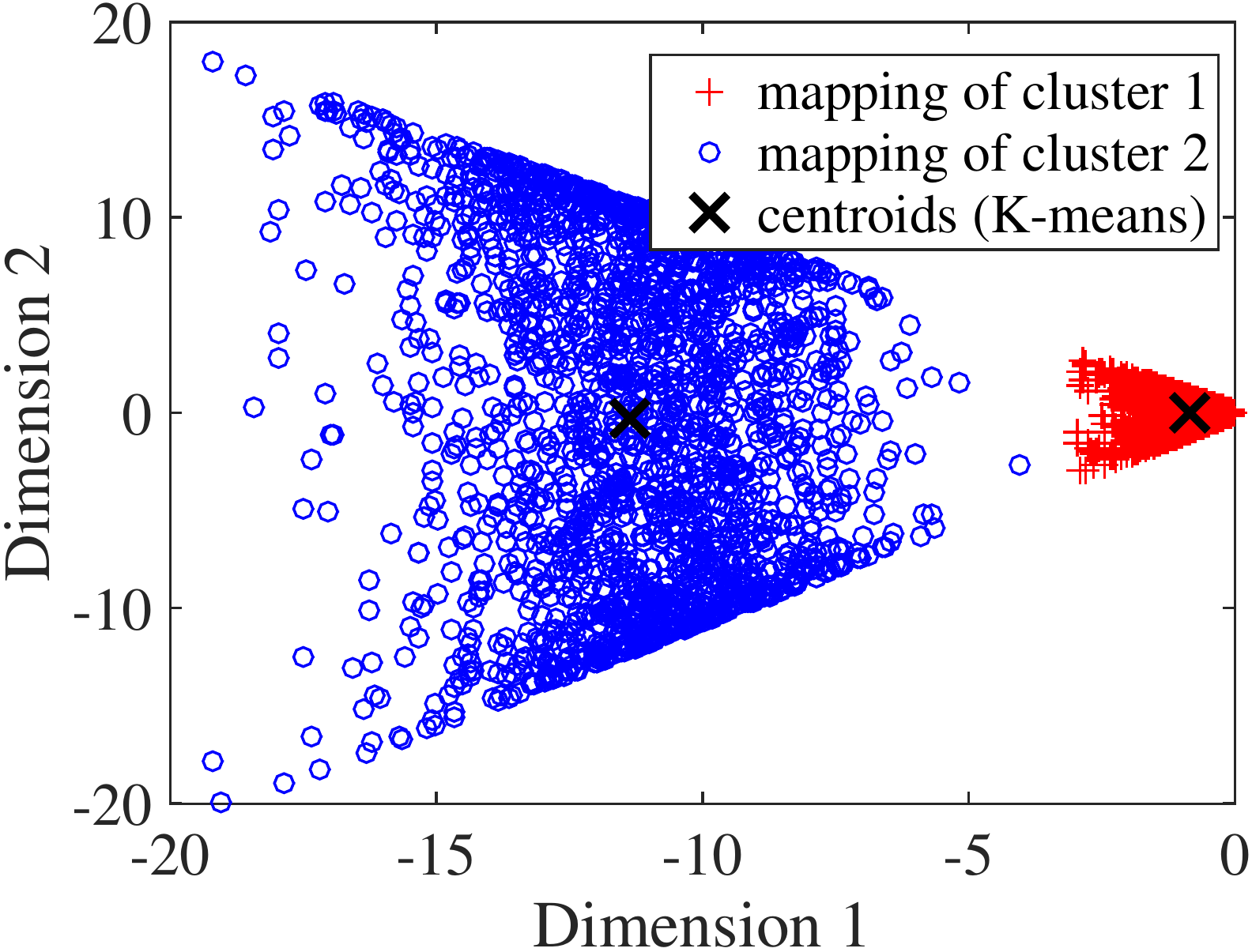}
				\par\end{centering}
		}\subfloat[Our Method]{\begin{centering}
			\includegraphics[width=0.48\columnwidth]{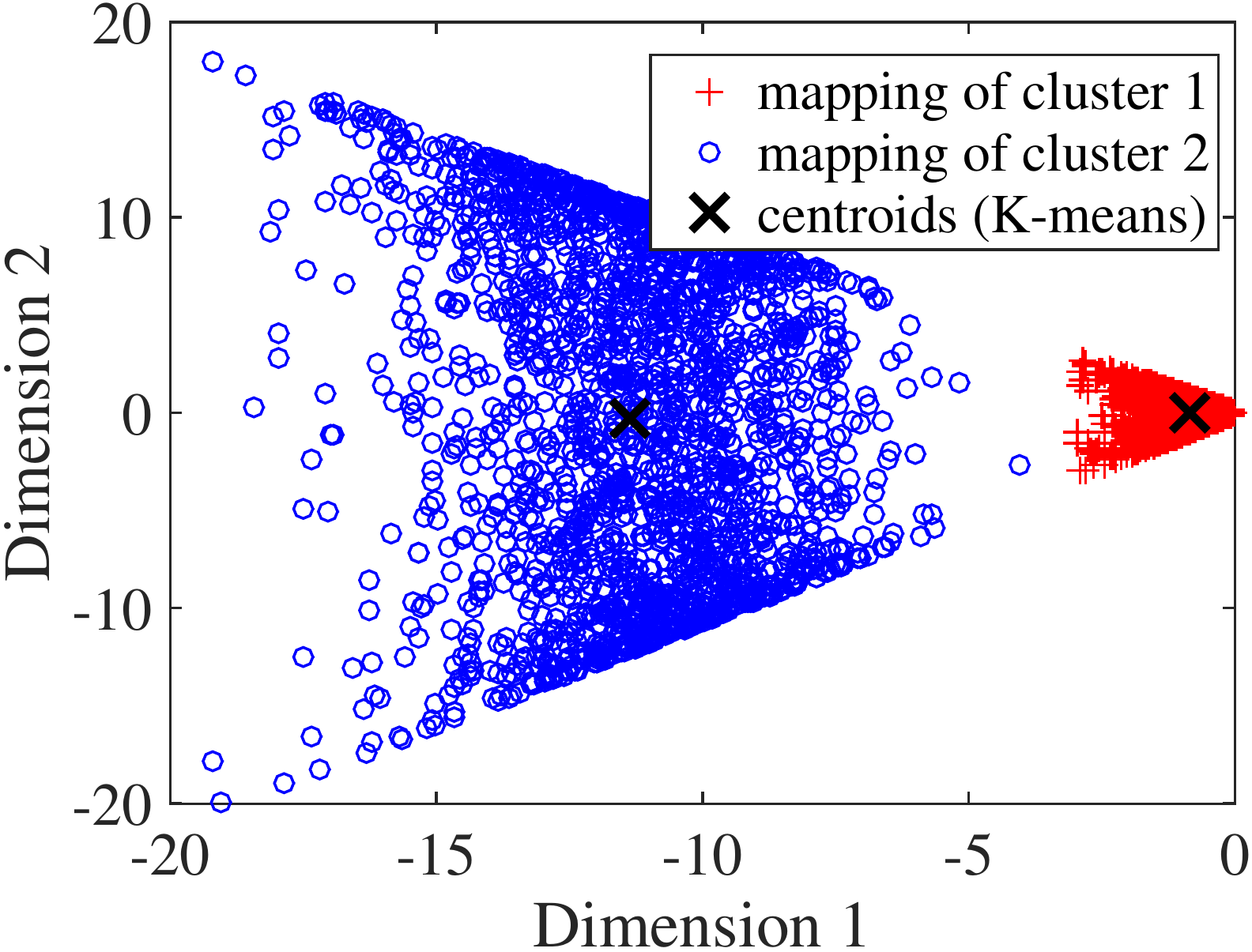}
			\par\end{centering}
	}
	\vspace*{-4mm}
	\subfloat[Nystr{\"o}m, $m$=20 ]{\begin{centering}
		\includegraphics[width=0.48\columnwidth]{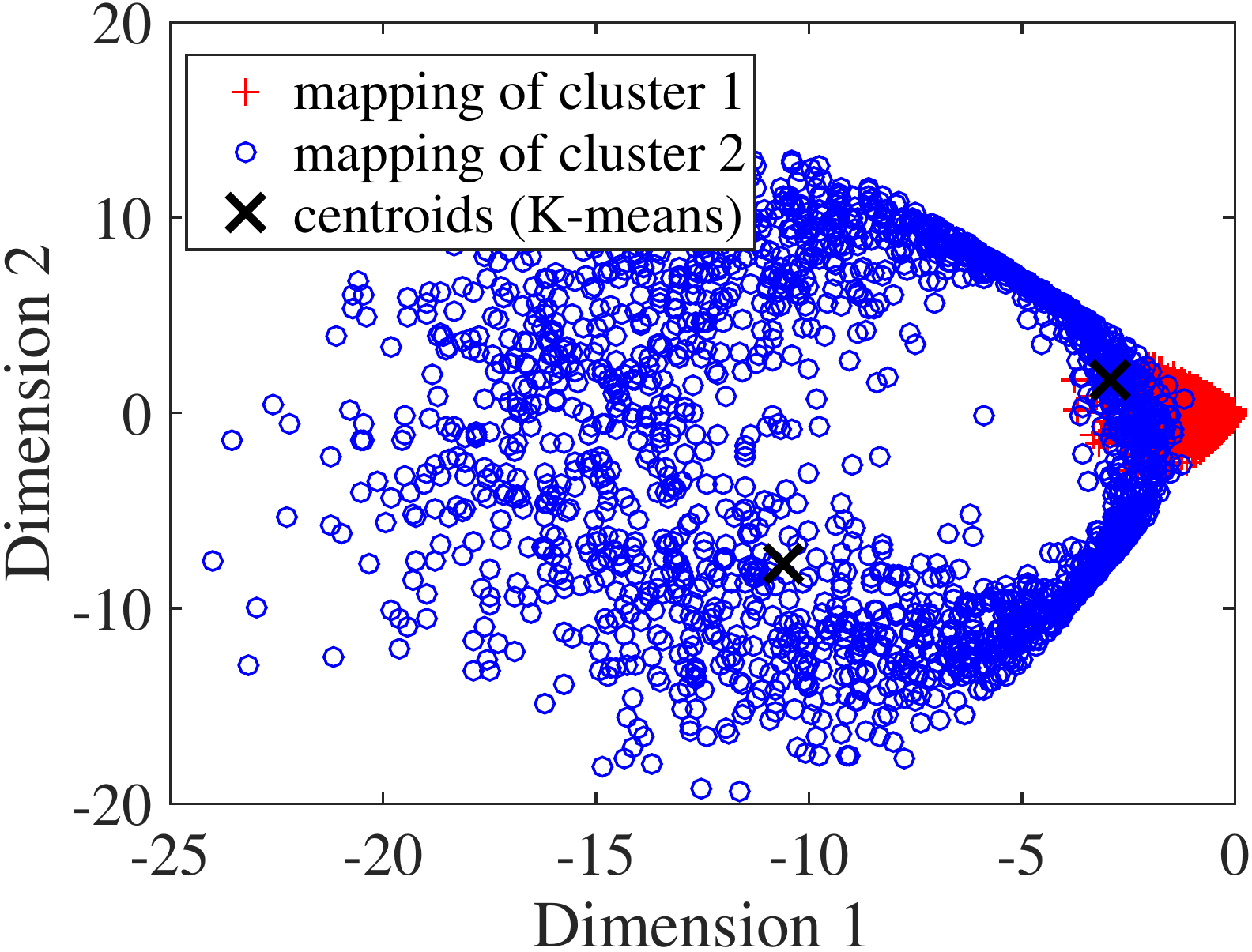}
		\par\end{centering}
}\subfloat[Nystr{\"o}m, $m$=100]{\begin{centering}
	\includegraphics[width=0.48\columnwidth]{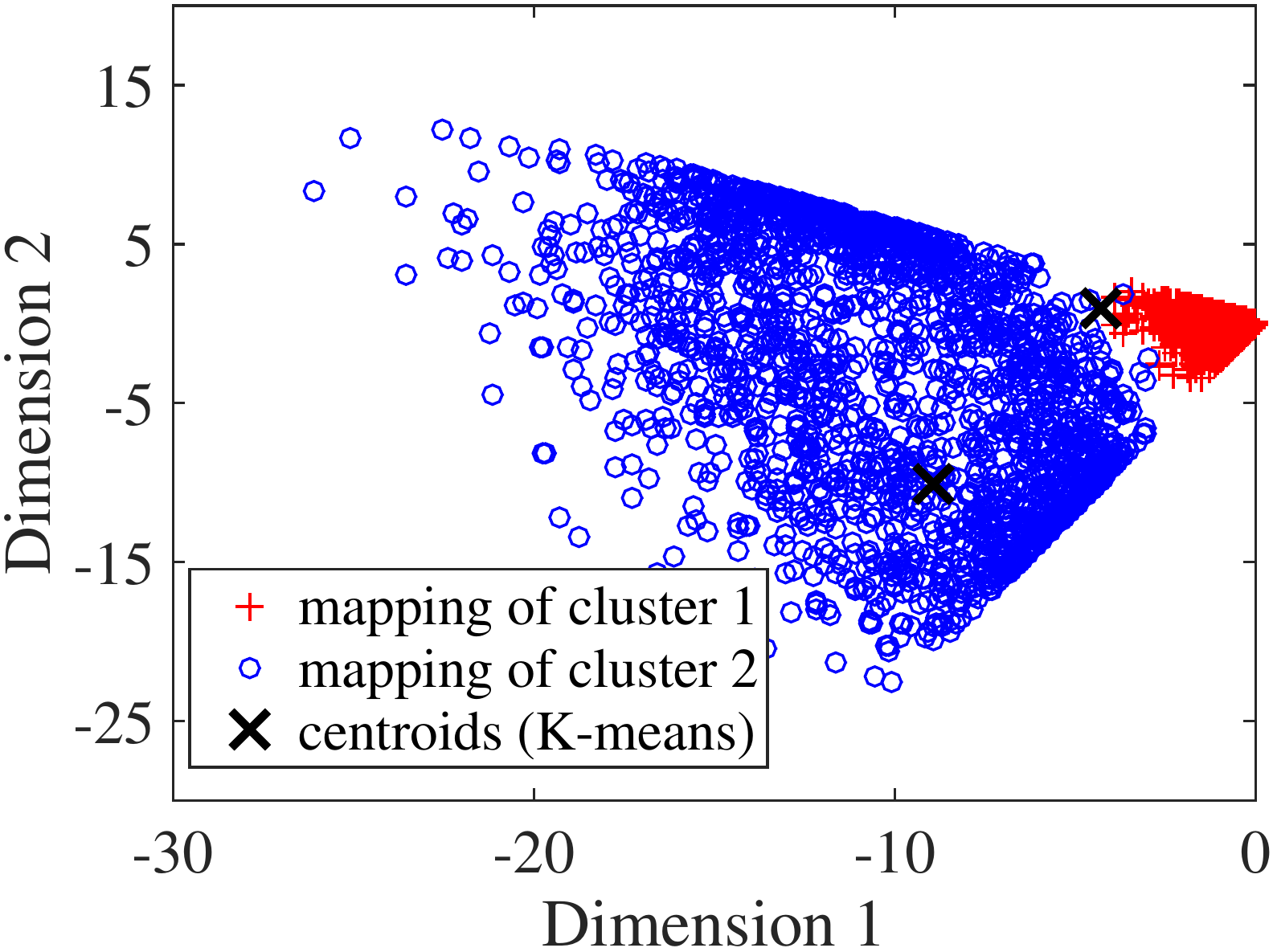}
	\par\end{centering}
}	
\par\end{centering}
\vspace*{-2mm}
\caption{Mapping of the original data using the low-rank approximation $\K\approx\Y^T\Y$. Applying K-means on the mapped data $\Y$ obtained by (a) the exact eigenvalue decomposition, or (b) our method, identifies the two underlying clusters accurately. 
	\label{fig:synthetic_mapped_data}}
\end{figure} 

We compute the kernel matrix $\K$ whose elements are obtained by using the polynomial kernel of order $d\!=\!2$, i.e., $\kappa(\x_i,\x_j)\!=\!\langle\x_i,\x_j\rangle^2$. The rank parameter is $r\!=\!2$ and our goal is to find $\Y\!\in\!\R^{2\times n}$ using the low-rank approximation of the kernel matrix. Our proposed method with the oversampling parameter $l\!=\!10$ is compared with the standard Nystr{\"o}m method with 
both $m\!=\!20$ and $m\!=\!100$ sampled columns.
We show the visualization of the new samples $\{\y_i\}_{i=1}^{4000}$ in Fig.~\ref{fig:synthetic_mapped_data}. Note that the accuracy of our approach is almost identical to the exact $r\!=\!2$ eigenvalue decomposition of the kernel matrix. Moreover, we see that our method is able to identify these two clusters using the standard K-means on the mapped samples $\Y$. However, the Nystr{\"o}m method does not provide an accurate solution even for a large value of $m=100\approx8r'$. We also compare the normalized kernel approximation error defined as $\|\K-\widehat{\K}\|_F/\|\K\|_F$ and the clustering accuracy in Table~\ref{table:synthetic_accuracy}. We see that our method outperforms the Nystr{\"o}m technique on this data set. 
\begin{table}[t]
	\caption{Accuracy of Kernel K-means methods, on data from Fig.~\ref{fig:synthetic_original_data}, $r\!=\!2$. For reference, (non-kernel) K-means has only $0.53$ accuracy. Our method takes the equivalent of $m\!=\!12$ columns in the Nystr{\"o}m approach.
	}
	\label{table:synthetic_accuracy}
    \vspace{-.4cm}
	\scriptsize
	\begin{center}
				\begin{tabular}{lccc}
                    \toprule 
					Method & \vtop{\hbox{\strut Kernel Approx.}\hbox{\strut Error}} & \vtop{\hbox{\strut Clustering}\hbox{\strut Accuracy}}\\
                    \midrule
					Exact Decomposition  & 0.40& 0.99\\
					Our Method & 0.40& 0.99\\
					Nystr{\"o}m, $m$=20     & 0.56& 0.74\\
					Nystr{\"o}m, $m$=100      & 0.44& 0.75      \\
                    \bottomrule
				\end{tabular}
	\end{center}
    \vspace{-.6cm}
\end{table}

Finally, we present the experimental evaluation of our method in Alg.~\ref{alg:OnePassEig} on a real-world data set. Our proposed approach is implemented in MATLAB (with the Hadamard code in C/mex) and compared against the standard Nystr{\"o}m method as well as the exact eigenvalue decomposition. We use the MATLAB \texttt{kmeans} function with $10$ different initializations, the maximum number of iterations is set to $20$ and $r=2$ is used for low-rank approximations of kernel matrices. Since Nystr{\"o}m and our method are stochastic, we re-run each experiment $100$ times and report the average over these trials.

We consider the image segmentation data set that can be downloaded from the UCI Repository. This data set contains $n=2310$ instances from $K=7$ outdoor images. Each instance represents a $3\times3$ region with $p=19$ attributes that are normalized to unit $\ell_2$ norm. Here, we choose the homogeneous polynomial kernel of order $d=2$, i.e., $\kappa(\x_i,\x_j)=\langle\x_i,\x_j\rangle^2$.

Fig.~\ref{fig:segment_data}(a) shows the normalized approximation error of the kernel matrix $\|\K-\widehat{\K}\|_F/\|\K\|_F$ for varying number of sampled columns $m$ and fixed oversampling parameter $l=5$ in our method. As we see, sampling $r'=2+5=7$ rows of the preconditioned kernel matrix $(\mathbf{H}\mathbf{D})\K$ leads to a more accurate decomposition that sampling $m=50\approx7r'$ columns of the kernel matrix $\K$ in the Nystr{\"o}m  method. Moreover, the accuracy of our approach is very close to the optimal exact eigenvalue decomposition. 

In Fig.~\ref{fig:segment_data}(b), the clustering accuracy of our method is compared with the other kernel clustering techniques. Again, we see that our approach has higher accuracy than the the Nystr{\"o}m decomposition approach. In this example, the accuracy of full Kernel K-means ($r=\text{rank}(\K)$) is $0.46$ and both our method and the approximate Kernel K-means using the exact eigenvalue decomposition (with $r=2$) have higher accuracy than the full Kernel K-means. 
\begin{figure}[t]
    \vspace{-.4cm}
	\begin{centering}
		\subfloat[Kernel Approx. Error]{\begin{centering}
				\includegraphics[width=.50\columnwidth]{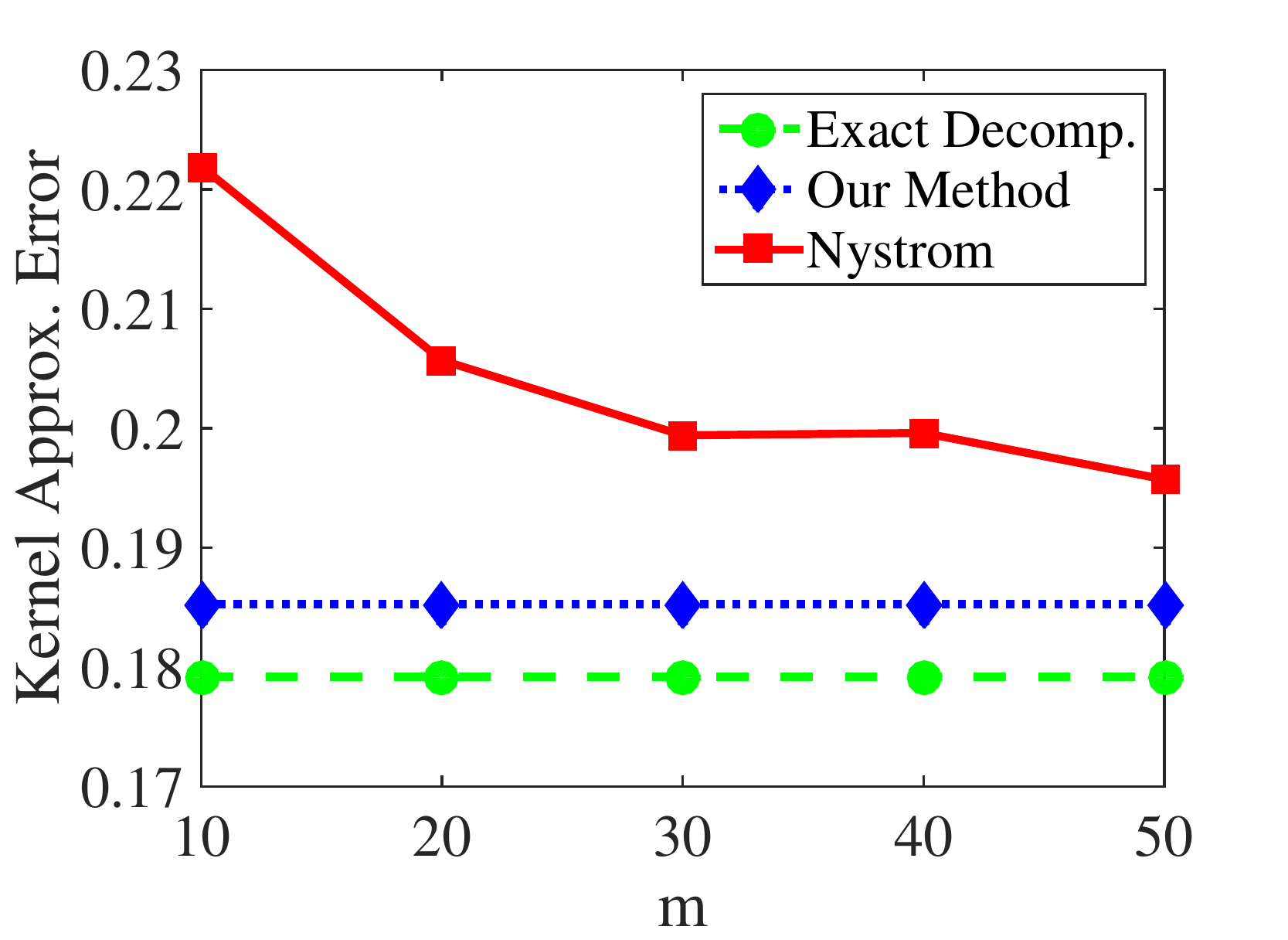}
				\par\end{centering}
		}
		\subfloat[Clustering Accuracy]{\begin{centering}
				\includegraphics[width=.50\columnwidth]{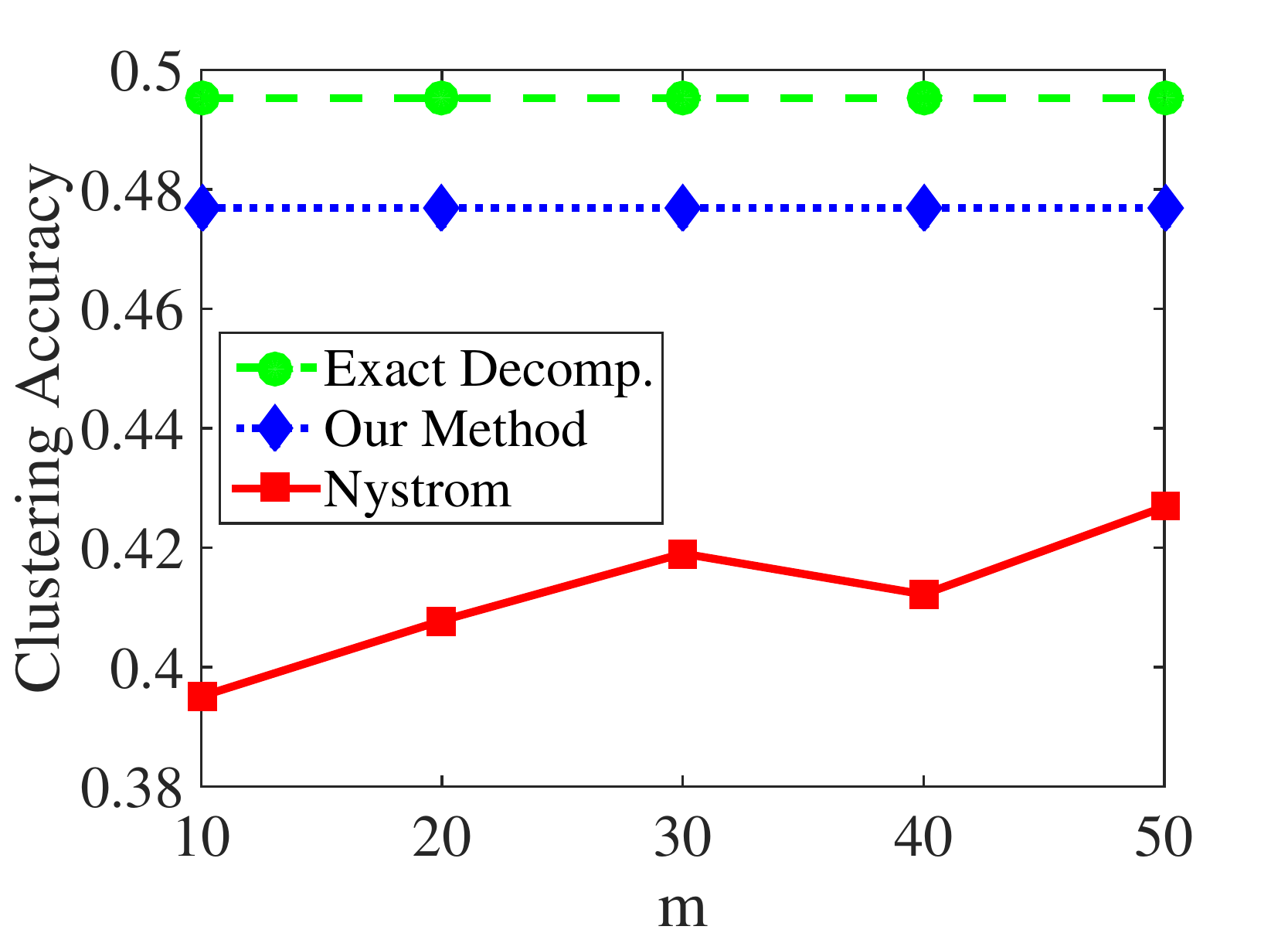}
				\par\end{centering}
		}	
		\par\end{centering}
	\caption{Results for image segmentation data set. Plot of (a) normalized approximation error of the kernel matrix (b) kernel clustering accuracy for varying number of sampled columns $m$.
		\label{fig:segment_data}}
\end{figure} 
    \vspace{-.3cm}
\section{Conclusions}\label{sec:conclusions}
\vspace{-.3cm}
We considered a class of approximate Kernel K-means algorithms in which the kernel matrix is replaced by its low-rank approximation. Our theoretical analysis provides insights into the effect of the approximation on the objective function of Kernel K-means, showing that the optimal objective value under the low-rank approximation is not far from the true objective value. 
Our theoretical result is applicable to any low-rank approximation technique. 

Furthermore, we introduced a specific one-pass randomized algorithm for Kernel K-means. 
Some benefits of our approach are ease of implementation, tunable accuracy vs.~memory/speed tradeoff using the parameter $r$, and low-memory requirements. The parameter $r$ is typically chosen with cross-validation on a subset of data. 
\bibliographystyle{IEEEbib}
\bibliography{phd_farhad}

\end{document}